\documentclass[10pt,letterpaper]{article}

\usepackage{cogsci}
\cogscifinalcopy

\usepackage[style=apa,natbib=true,annotation=false]{biblatex}
\usepackage{float}

\usepackage{hyperref}
\usepackage{cleveref}
\crefname{thm}{theorem}{theorems}
\Crefname{thm}{Theorem}{Theorems}
\crefname{lem}{lemma}{lemmas}
\Crefname{lem}{Lemma}{Lemmas}

\usepackage{amsmath}
\usepackage{dsfont}

\usepackage{amsthm}
\usepackage{thm-restate}
\usepackage{mathtools}
\usepackage{xparse}
\usepackage{xfrac}

\DeclareMathOperator*{\E}{\mathchoice
  {\scalebox{1.5}{\ensuremath{\mathbb{E}}}}
  {\mathbb{E}}
  {\mathbb{E}}
  {\mathbb{E}}
}
\DeclareMathOperator*{\Var}{\mathchoice
  {\scalebox{1.2}{\ensuremath{\mathrm{Var}}}}
  {\ensuremath{\mathrm{Var}}}
  {\ensuremath{\mathrm{Var}}}
  {\ensuremath{\mathrm{Var}}}
}

\NewDocumentCommand{\p}{ o o }{%
  \ensuremath{\pi^{(\IfNoValueTF{#2}{N}{#2})}_{s_{\IfNoValueTF{#1}{i}{#1}}}}%
}

\NewDocumentCommand{\pj}{ o o }{%
  \ensuremath{\pi^{(\IfNoValueTF{#2}{N}{#2})}_{\IfNoValueTF{#1}{j}{#1}}}%
}

\NewDocumentCommand{\m}{ o o }{%
  \ensuremath{M^{(\IfNoValueTF{#2}{N}{#2})}_{s_{\IfNoValueTF{#1}{i}{#1}}}}%
}

\theoremstyle{plain}

\newtheorem{lem}{Lemma}

\theoremstyle{definition}

\allowdisplaybreaks

\title{Algorithmic Consequences of Particle Filters for Sentence Processing:\\ Amplified Garden-Paths and Digging-In Effects}

\author{\mbox{Amani Maina-Kilaas (amanirmk@mit.edu)}}
\author{\mbox{Roger Levy}}
\affil{Department of Brain and Cognitive Sciences, MIT}

\begin{document}

\maketitle

\begin{abstract}
Under surprisal theory, linguistic representations affect processing difficulty only through the bottleneck of surprisal. Our best estimates of surprisal come from large language models, which have no explicit representation of structural ambiguity. While LLM surprisal robustly predicts reading times across languages, it systematically underpredicts difficulty when structural expectations are violated---suggesting that representations of ambiguity are causally implicated in sentence processing. Particle filter models offer an alternative where structural hypotheses are explicitly represented as a finite set of particles. We prove several algorithmic consequences of particle filter models, including the amplification of garden-path effects. Most critically, we demonstrate that resampling, a common practice with these models, inherently produces real-time digging-in effects---where disambiguation difficulty increases with ambiguous region length. Digging-in magnitude scales inversely with particle count: fully parallel models predict no such effect.

\textbf{Keywords:} sentence processing; surprisal theory; particle filters; digging-in effects; garden-path effects; limited parallelism
\end{abstract}

\section{Introduction}

How does the mind maintain expectations about syntactic structure as sentences unfold in real time? This question lies at the heart of psycholinguistics, with deep implications for theories of language processing. One dominant framework---surprisal theory---proposes that processing difficulty reflects how much a word forces us to update our expectations. Yet this theory assumes that comprehenders maintain all possible structural interpretations simultaneously---an assumption questioned by evidence that fully parallel models underpredict the difficulty of structural violations. Particle filter models offer a potential solution, providing a ``rational approximation'' that represents structural uncertainty through a finite set of hypotheses. Here, we prove that limited parallelism amplifies garden-path effects, and that resampling---a common practice with these models---causes words that disambiguate toward dispreferred structures to become harder to process over time, without any new information. Understanding that particle filters predict such digging-in effects---typically associated with dynamical systems models---is crucial for evaluating them as cognitive architectures for human sentence processing.

\subsection{Surprisal Theory}

Surprisal theory is an expectation-based, resource-allocation theory of sentence processing difficulty \citep{hale-2001-probabilistic, Levy2008ExpectationbasedSC}. Imagine taking all possible structural analyses consistent with a partial sentence and ranking them preferentially according to a probability distribution. With each new word, invalidated analyses are discarded and their probability mass reallocated. Cognitive effort corresponds to this change in the distribution, as measured by relative entropy. Equivalently, effort is proportional to the word's information content, measured by \textbf{surprisal}---the negative log-probability of the word in context. Large language models (LLMs) operationalize this theory, implicitly marginalizing over all possible structures to predict the next word. While LLM surprisal robustly predicts reading times in diverse languages \citep{wilcox-etal-2023-testing, cory2024large}, it systematically underpredicts difficulty when structural expectations are violated \citep{Huang2024LargescaleBY,wilcox-etal-2021-targeted,vanSchijndel2020SingleStagePM}. The underprediction suggests that representations of ambiguous syntactic structure are causally implicated during language processing in a way that LLMs do not capture. Although the standard surprisal theory invokes total parallelism, it is acknowledged that such parallelism is computationally expensive and may not be psychologically plausible \citep{Jurafsky1996APM,Levy2008ExpectationbasedSC}, suggesting the possibility of a limitedly parallel parser where multiple but not all analyses are maintained. Here, particle filter models offer a compelling alternative to LLMs: they approximate the full distribution using a finite set of analyses, providing a natural account of how resource-limited comprehenders might achieve near-rational performance.

\subsection{Particle Filter Models}

A particle filter represents uncertainty about the current state of the world---in our case, the syntactic structure of a sentence---using a collection of $N$ \textbf{particles}, each corresponding to a hypothesis about that state \citep{Gordon1993NovelAT}. The key idea is to recursively update a set of samples from a distribution as more data are obtained. Particle filters belong to the family of Monte Carlo methods, which have made it possible to solve problems that were previously intractable \citep{Doucet2001SequentialMC}, and can be considered ``rational approximations to rational models'' in cognitive science \citep{Sanborn2010RationalAT}.

Applied to sentence processing, each particle represents a structural analysis of the sentence so far and carries a weight reflecting its current plausibility. As new words arrive, the model updates its beliefs through a simple cycle: (1) weights are updated based on how well each particle's structure predicted the observed word, (2) particles are resampled with replacement according to their weights, and (3) weights are reset to uniform. The resampling step converts a weighted set of particles into a fresh unweighted sample from the same distribution. With finite particles, sampling variance can shift the distribution and eliminate structures entirely.

This framework has several properties that make it attractive as a limitedly parallel implementation of surprisal theory: the algorithm is inherently incremental and can yield word-by-word surprisal estimates, ambiguous syntactic structure is explicitly represented in particles, and the number of particles $N$ provides a natural parameter for memory limitations.

\subsection{Algorithmic Consequences}

Understanding what candidate algorithms predict is crucial for evaluating cognitive theories and constraining models of human sentence processing. ``Rational'' models---such as surprisal theory---are distinguished from ``dynamical systems'' models \citep{Tabor2004EvidenceFS,TABOR1999491,TABOR2004355} in at least one important regard: the latter have a feedback mechanism where preferred analyses continue to grow in strength over time. This predicts \textbf{digging-in effects} \citep{Tabor2004EvidenceFS}, where disambiguation difficulty increases with the length of the ambiguous region. This effect is not naturally predicted by surprisal theory; in fact, \citet{futrell-etal-2019-neural} found that recurrent neural networks predicted greater difficulty with shorter ambiguous regions. However, \citet{Levy2008ModelingTE} demonstrated that a particle filter can provide a rational account of digging-in: dispreferred analyses may be deleted at each resampling step and, if the correct analysis is no longer present at disambiguation, the parser would fail. Their model successfully predicted retrospective difficulty ratings by linking them to parse failure probability.

Yet for surprisal theory---a framework fundamentally concerned with incremental difficulty---\textit{real-time} digging-in effects are the critical test case. \citet{Levy2008ModelingTE} leaves open a crucial question: if comprehenders successfully parse the sentence without failure, should longer ambiguous regions still produce greater processing difficulty at disambiguation? Here we mathematically demonstrate several algorithmic consequences of particle filter models---most critically, that resampling produces real-time digging-in effects.\\\par

We begin by introducing the formal setup for our analysis, then present the main conceptual results. We subsequently provide the theoretical analysis for interested readers and conclude by discussing implications for models of sentence processing. Proofs are available in the appendix of the paper.

\section{Formal Setup}

Consider a sentence context $C$: under any probabilistic model, the distribution over possible next words $w$ is \mbox{$P(w\mid C)$}. If possible grammatical structures \mbox{$T \in \mathcal{T}$} mediate the relationship between this context and the next words, then we can make this explicit via marginalization:
\begin{equation}
    P(w\mid C) = \sum_{T \in \mathcal{T}}P(w\mid T,C)P(T\mid C).
\end{equation}

Now assume we model this using a particle filter, which estimates the probability of a word \mbox{$M(w\mid C)$} using learned distributions \mbox{$\pi(T\mid C)$} over structures and \mbox{$Q(w\mid T,C)$} over words given structures:
\begin{equation}
    M(w\mid C) \coloneqq \sum_{T\in \mathcal{T}} Q(w\mid T,C)\pi(T\mid C).
\end{equation}
In practice, the model samples a set of $N$ particles---representing structures---from \mbox{$\pi(T\mid C)$}, producing an approximation \mbox{$\pi^{(N)}(T\mid C)$} and corresponding \mbox{$M^{(N)}(w\mid C)$}. After observing a new word and updating particle weights, particles are then resampled with replacement. Here, we make a strong simplifying assumption: each word in the lengthened context provides \textit{no additional information}---that is, $Q$ and $\pi$ remain fixed. This assumption isolates the resampling mechanism from language statistics, allowing us to show that digging-in effects are inherent to the process itself. Under this assumption, the only effect of lengthening the context is invoking resampling steps. Borrowing the notational convention of \citet{Fearnhead2010RandomweightPF}, we denote the model after $i$ steps of resampling with $N$ particles as:
\begin{equation}
 \m(w\mid C) \coloneqq \sum_{T\in \mathcal{T}} Q(w\mid T,C)\p(T\mid C).
\end{equation}
Throughout this work, our quantity of interest is the expected\footnote{In this work, the expectation subscript denotes the variable being averaged: \mbox{$\E_{\p}\left[S\left(\p\right)\right] = \sum_{\p} S\left(\p\right) P\left(\p\right)$}.} surprisal over possible particle sets \mbox{$\E_{\p}\left[S\left(\p\right)\right]$}, where \mbox{$S\left(\p\right) \coloneqq -\log\m\left(w\mid C\right)$} is the surprisal for a given word in context estimated using the current set of particles $\p$. In a behavioral experiment, the expected surprisal is the quantity that should correspond to the average response time from a large sample of participants, under surprisal theory.

\section{Main Results}

Here, we present our central findings in accessible terms. The formal mathematical statements underlying these interpretations are provided in the following section.

\subsection{Limited Parallelism Increases Expected Surprisal}

Our first result establishes that limited parallelism, in and of itself, increases expected surprisal relative to the true distribution. The only exceptions occur when the word carries no structural information or only a single structure is maintained.

Consequently, particle filter models should show magnified surprisal effects compared to fully parallel models. The top row of \Cref{fig:dig} illustrates this effect: expected surprisal grows as particles $N$ decreases. This result derives from \Cref{thm:surprisal_under_resampling}.

\subsection{Limited Parallelism Amplifies Garden-Path Effects}

Building on the previous result, we find that expected surprisal increases more for words disambiguating toward dispreferred structures. The increase is larger when disambiguation is strong (structures assign very different probabilities to the word) and the word's overall probability is low (the most likely structures assign very low probability).

For garden-path sentences, this means that expected surprisal increases more in ambiguous contexts, amplifying the difference between conditions. The bottom row of \Cref{fig:dig} shows how garden-path effects grow as the particle count $N$ decreases. This result derives from \Cref{thm:second_order_delta,thm:linear_diffusion_delta}.

\subsection{Resampling Produces Real-Time Digging-In Effects}

The prior results describe what happens when a new distribution is formed by sampling finite particles from a previous one. Resampling repeats this process, with each step monotonically increasing expected surprisal and magnifying garden-path effects. \Cref{fig:resampling} illustrates how the expected surprisal evolves over successive resampling steps when disambiguating to preferred or dispreferred structures.

This may seem counterintuitive: resampling is a neutral drift process, so why should next-word predictions change? In fact, they do not---the \textit{probability} of each word remains constant in expectation. However, variance in its estimate combines with the nonlinear linking function of surprisal to produce systematic increases.

The consequence is digging-in effects. Under our assumption of uninformative ambiguous regions, longer regions induce more resampling steps and thus larger garden-path effects. The bottom row of \Cref{fig:dig} shows that digging-in is predicted whenever multiple but finite particles are used. This result derives from \Cref{thm:second_order_delta,thm:linear_diffusion_delta}.

\begin{figure}[htbp]
    \centering
    \includegraphics[width=0.49\linewidth]{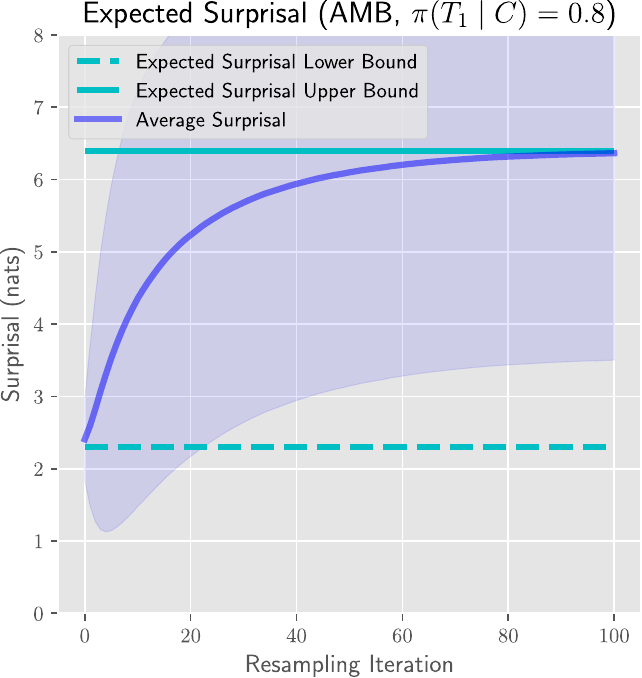} \hfill
    \includegraphics[width=0.49\linewidth]{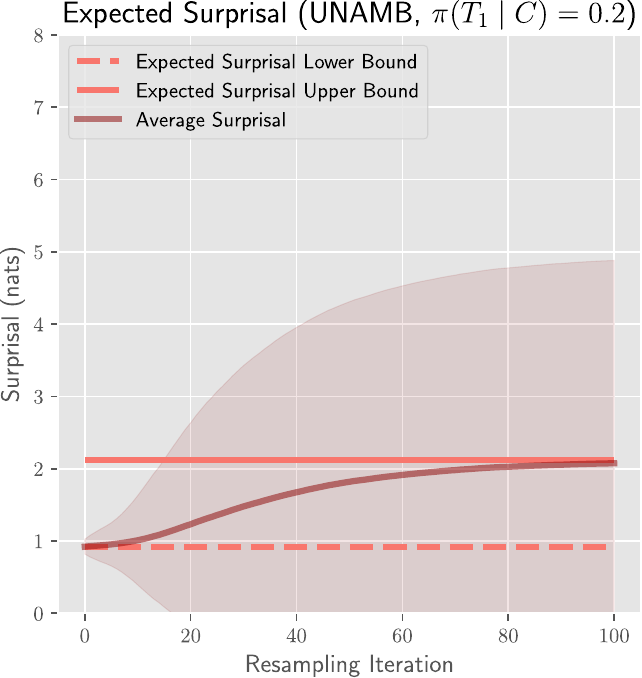}
    \caption{Expected surprisal during resampling. Assume only two structures, $T_1$ and $T_2$, which give word $w$ the in-context probability \mbox{$Q(w\mid T_1,C) = 0.004$} and \mbox{$Q(w\mid T_2,C) = 0.5$}, such that $w$ strongly disambiguates to $T_2$. Using 25 particles, we simulate the change in surprisal with a context that prefers $T_1$, \mbox{$\pi_{\text{AMB}}(T_1\mid C) = 0.8$}, and a context that prefers $T_2$, \mbox{$\pi_{\text{UNAMB}}(T_1\mid C) = 0.2$} (\textit{not unambiguous in typical sense}). Expected surprisal is lower-bounded by $S\left(\pi\right)$, reflecting full parallelism, and upper-bounded by $\E_{\p[\infty]}\left[S\left(\p[\infty]\right)\right]$, at which point only a single structure is entertained by the parser. Shaded regions indicate \textpm1 stdev of the sample (50 thousand trials); 95\% error bars would not exceed the line width.}
    \label{fig:resampling}
\end{figure}

\begin{figure}[htbp]
    \centering
    \vspace{1.1mm}
    \includegraphics[width=0.49\linewidth]{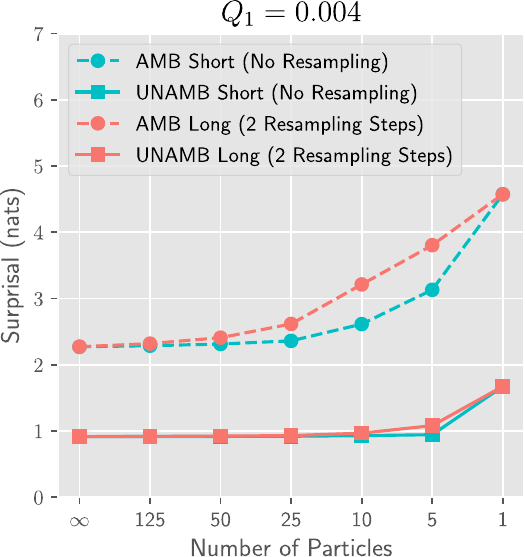} \hfill
    \includegraphics[width=0.49\linewidth]{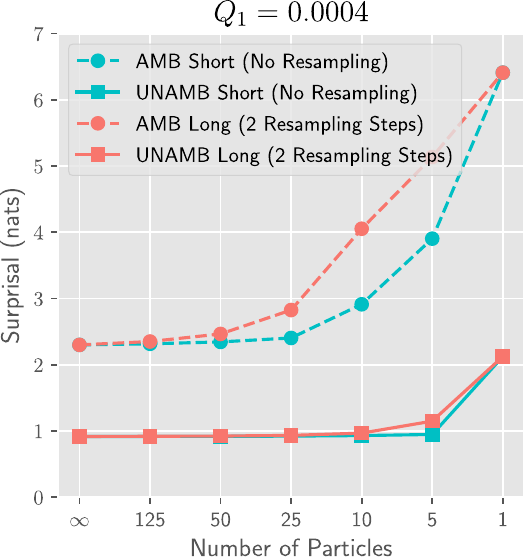} \\ 
    \vspace{2mm}
    \includegraphics[width=0.49\linewidth]{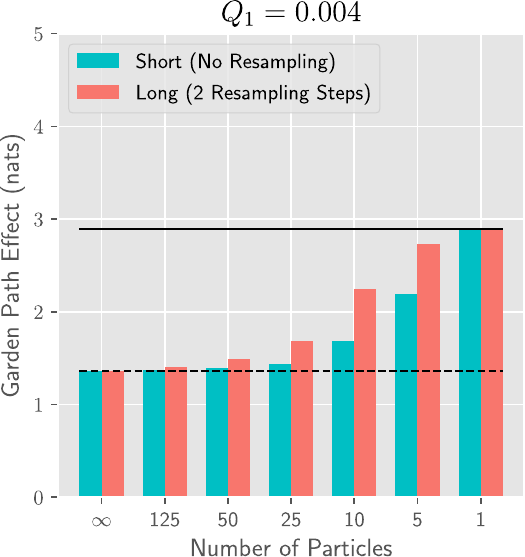} \hfill
    \includegraphics[width=0.49\linewidth]{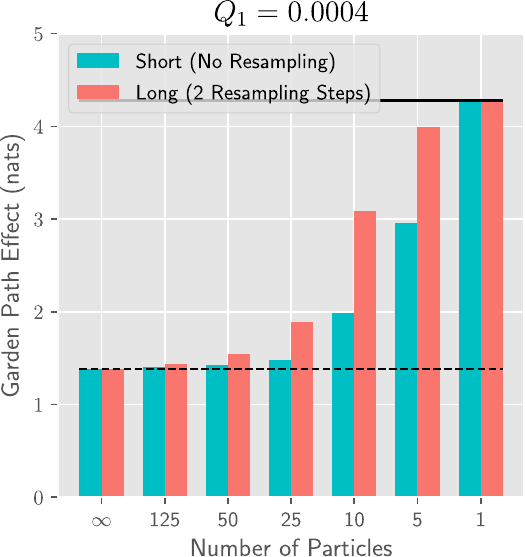}
    \caption{Expected surprisal (top) and garden-path effects (bottom) for a hypothetical digging-in experiment with short and long ambiguous regions, varying disambiguation strength (through $Q_1 = Q(w \mid T_1, C)$) and particles $N$. Long versions typically add 2-3 words and so we use 2 resampling steps. Fixed parameters: $Q(w\mid T_2,C) = 0.5$, \mbox{$\pi_{\text{AMB}}(T_1\mid C) = 0.8$}, \mbox{$\pi_{\text{UNAMB}}(T_1\mid C) = 0.2$} (\textit{where preferred is correct}). Using 50 thousand trials; 95\% error bars would not be visible.}
    \label{fig:dig}
\end{figure}

\section{Theoretical Analysis}

This section provides the mathematical foundations for the aforementioned results. We begin by asking: what happens to the expected surprisal when we resample the particles?

\begin{restatable}[Expected Surprisal Under Resampling]{thm}{surprisalUnderResampling}\label{thm:surprisal_under_resampling}
    The expected surprisal of any word in context increases monotonically as a function of resampling steps. That is, $$\E_{\p[i+1]}\left[S\left(\p[i+1]\right)\right] \geq \E_{\p}\left[S\left(\p\right)\right],$$
    Additionally, the above inequality is strict if the following statement holds: $$\E_{\p}\left[\Var\left(\m[i+1]\mid \p\right)\right] > 0.$$
\end{restatable}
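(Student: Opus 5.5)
The argument conditions on the current particle set, uses the martingale property of resampling, and applies Jensen's inequality to the convex map $x \mapsto -\log x$.

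\textbf{Step 1: resampling is unbiased.} Fix the current particle set $\p$. The next set $\p[i+1]$ is obtained by drawing $N$ particles i.i.d.\ (multinomially) from $\p$, with weights uniform because $Q$ and $\pi$ do not change. For each structure $T$, the count of particles carrying $T$ is then Binomial$(N, \p(T\mid C))$. Hence
$$\E\left[\p[i+1](T\mid C)\mid \p\right] = \p(T\mid C).$$
Since $\m[i+1](w\mid C)$ is linear in $\p[i+1]$ with fixed coefficients $Q(w\mid T,C)$, it follows that
$$\E\left[\m[i+1](w\mid C)\mid \p\right] = \m(w\mid C).$$
So $\m$ is a martingale in $i$.

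\textbf{Step 2: conditional Jensen and the tower property.} Because $-\log$ is convex,
$$\E\left[S\left(\p[i+1]\right)\mid \p\right] \geq -\log \E\left[\m[i+1]\mid \p\right] = S\left(\p\right).$$
Averaging over $\p$ by the tower property gives the monotone inequality.

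\textbf{Step 3: technical caveat.} If some $\m[i+1] = 0$ can occur, the surprisal is $+\infty$ and the inequality holds trivially. One could also restrict to words with $Q > 0$ under every structure.

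\textbf{Step 4: strictness.} The function $-\log$ is strictly convex on $(0,\infty)$. Therefore the conditional Jensen inequality is strict whenever $\m[i+1]$ is not almost surely constant given $\p$, that is, whenever $\Var\left(\m[i+1]\mid \p\right) > 0$.

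Suppose $\E_{\p}\left[\Var\left(\m[i+1]\mid \p\right)\right] > 0$. Then the set of particle configurations $\p$ with positive conditional variance has positive probability, because the variance is nonnegative. On that set the conditional inequality is strict, and everywhere else it is weak. Taking the expectation over $\p$ gives a strict overall inequality.

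The particle-set space is finite, so all expectations are finite sums and no measure-theoretic issues arise. The only delicate point is finiteness of the surprisals, already handled in Step 3: if $S$ is infinite with positive probability at step $i$, both sides are infinite, and the strict claim should be read as applying when the step-$i$ expectation is finite.
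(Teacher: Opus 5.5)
Your proposal is correct and follows essentially the same route as the paper: condition on $\p$, use unbiasedness of resampling and linearity of $M$ in $\pi$ to get $\E\left[\m[i+1]\mid \p\right]=\m$, apply Jensen's inequality to the strictly convex $-\log$, and average over $\p$ by the law of total expectation, with strictness coming from positive conditional variance on a set of positive probability. Your extra care with zero-probability words is a small refinement the paper leaves implicit. In particular, you note that the strict claim is only meaningful when the step-$i$ expected surprisal is finite.
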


More intuitively, if changing the distribution over structures also changes the probability of the word, and the distribution over structures is able to be changed, then \textbf{resampling will always increase the expected surprisal of the word}. This result also holds when sampling the initial $\p[0]$ from $\pi$ (which we can view as $\p[-1]$). Next, we ask: what happens if we continue this process of resampling forever?

\begin{restatable}[Cost of Repeated Resampling]{thm}{costRepeatedResampling}\label{thm:cost_repeated_resampling}
Resampling indefinitely introduces a cost that is associated with ignoring the structural information present in the word. Define \mbox{$\p[\infty] \coloneqq \lim_{i \to \infty} \p$}. Then 
\begin{align*}
    \E_{\p[\infty]}\left[S\left(\p[\infty]\right)\mid \p\right]& - S\left(\p\right)
    = \\* 
    D_{KL}&\left(\p(T\mid C) \,\|\, \p(T\mid w,C) \right),
\intertext{where, following Bayes' rule,}
    \p(T\mid w,C) \coloneqq& \frac{Q(w\mid T,C)\p(T\mid C)}{\sum_{T^* \in \mathcal{T}}Q(w\mid T^*,C)\p(T^*\mid C)}.
\end{align*}
\end{restatable}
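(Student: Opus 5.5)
Resampling with fixed $\pi$ and $Q$ and no new information is a neutral Wright--Fisher process on particle counts over the structures. I would exploit two of its properties: the structure frequencies are martingales, and every structure is eventually either fixed or lost.

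First, conditional on $\p$, let $p_T := \p(T\mid C)$. Each resampling step draws $N$ particles i.i.d.\ from the current empirical distribution. So the vector of counts is a Markov chain on a finite state space, and $\E[\p[i+1](T\mid C)\mid \p] = \p(T\mid C)$. By iterating, $\p[j](T\mid C)$ is a bounded martingale in $j$ for each $T$.

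Next, I would show absorption. The absorbing states are exactly the degenerate distributions $\delta_{T}$ for $T$ in the support of $p$. From any non-degenerate state, there is positive probability (at least $N^{-N}$) of reaching a degenerate state in one step. Hence absorption happens almost surely, and $\p[\infty]$ is well defined as a random point mass $\delta_{T^\star}$. Since the martingales are bounded, optional stopping (or dominated convergence) gives $P(T^\star = T\mid \p) = p_T$.

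Then compute the limit expectation. When the limit is $\delta_{T}$, the estimate is $M^{(N)}_{s_\infty}(w\mid C) = Q(w\mid T,C)$. Therefore
\begin{equation*}
\E_{\p[\infty]}\left[S\left(\p[\infty]\right)\mid \p\right] = -\sum_T p_T \log Q(w\mid T,C).
\end{equation*}
The conclusion follows by subtracting $S(\p) = -\log \sum_{T^*} Q(w\mid T^*,C)p_{T^*}$ and writing $-\log Q(w\mid T,C) + \log \m(w\mid C) = \log\bigl(p_T/\p(T\mid w,C)\bigr)$, using the Bayes expression in the statement. Averaging this identity over $T \sim p$ gives exactly $D_{KL}\left(\p(T\mid C)\,\|\,\p(T\mid w,C)\right)$. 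If $Q(w\mid T,C) = 0$ for some $T$ with $p_T > 0$, both sides are $+\infty$, and the identity still holds.

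The main obstacle is making the limit rigorous. A limit of the random measures $\p[j]$ exists only almost surely, and exchanging the limit with the expectation of the unbounded function $-\log M$ needs justification. I would handle this through absorption: after the absorption time, $S$ is constant, and the hitting time is almost surely finite. The fixation probabilities come from bounded convergence applied to $\p[j](T\mid C)$, not to $S$. Computing $\E[S(\p[\infty])]$ then needs only these fixation probabilities, so no interchange involving $S$ itself is required.
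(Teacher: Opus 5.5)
Your proposal is correct and follows essentially the same route as the paper: a martingale argument showing that each structure eventually takes all the mass with probability $\p(T\mid C)$, so the limiting expected surprisal is the $\p$-average of $-\log Q(w\mid T,C)$, followed by the same algebraic rewriting into the KL divergence. Your absorption argument (a uniform $N^{-N}$ chance of collapsing to one structure in a single step, hence an almost surely finite hitting time) is a cleaner justification than the paper's, which invokes the Martingale Convergence Theorem and then asserts informally that the limit must lie in $\{0,1\}$.
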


Starting from any point in the resampling process, expected surprisal monotonically increases, eventually reflecting total disregard for the structural information in $w$. \textbf{This cost is greatest when the word disambiguates to a dispreferred structure.} At the population level, resampling appears to strengthen commitment to the structural prior---an increasing unwillingness to update beliefs. However, this is somewhat misleading: at the individual level, each sample entertains fewer analyses and becomes restricted in its ability to update, eventually committing to a single structure drawn from the prior. If any prior structures assign the word zero probability, causing total parse failure for some individuals, the expected surprisal increase is infinite.

\Cref{thm:surprisal_under_resampling,thm:cost_repeated_resampling} hint at digging-in effects, but demonstrating them requires showing that expected surprisal increases \textit{faster} for words disambiguating to dispreferred structures. To make this step, we next analyze the \textit{surprisal delta}:
\begin{equation}
    \Delta S(i) \coloneqq \E_{\p[i+1]}\left[S\left(\p[i+1]\right)\right] - \E_{\p}\left[S\left(\p\right)\right],
\end{equation}
the increase in expected surprisal from resampling step $i$ to $i+1$. In order to provide an intuition for digging-in effects, we work with two interpretable approximations of $\Delta S(i)$.

\begin{restatable}[Second-Order Approximation of Surprisal Delta]{thm}{secondOrderDelta}\label{thm:second_order_delta}
If the second-order Taylor series provides a good approximation of \mbox{$f(x) = -\log(x)$} around \mbox{$x = \m$} for $N$ particles, then the surprisal delta can be estimated as
$$\Delta S(i) \approx \frac{1}{2N} \E_{\p}\left[CV_{\p}(Q)^2\right],$$
where \mbox{$CV_{\p}(Q)$} is the coefficient of variation of word probability \mbox{$Q(w\mid T,C)$} across structures under \mbox{$\p(T\mid C)$}.
\end{restatable}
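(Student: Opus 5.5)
The approach is to condition on the current particle set $\p$, expand the surprisal of the resampled set to second order around $\m$, and then average over $\p$. By the tower property,
\[
\Delta S(i) = \E_{\p}\left[\E\left[-\log \m[i+1] \mid \p\right] - \left(-\log \m\right)\right].
\]
Multinomial resampling draws $N$ particles i.i.d.\ from $\p(T\mid C)$. Hence $\m[i+1] = \frac{1}{N}\sum_{k=1}^N Q(w\mid T_k,C)$, where $T_k \sim \p$ independently. This is a sample mean of $N$ i.i.d.\ copies of the random variable $Q(w\mid T,C)$ with $T\sim\p$.

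Two conditional moments follow immediately. The first is unbiasedness: $\E[\m[i+1]\mid\p] = \sum_T Q(w\mid T,C)\p(T\mid C) = \m$. The second is the variance of a sample mean, $\Var(\m[i+1]\mid\p) = \frac{1}{N}\Var_{\p}(Q)$, where $\Var_{\p}(Q)$ is the variance of $Q(w\mid T,C)$ under $T\sim\p(T\mid C)$. The same conditional variance appears in the strictness condition of \Cref{thm:surprisal_under_resampling}.

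Next, expand $f(x)=-\log x$ around $x_0=\m$:
\[
f(x)\approx f(x_0) - \frac{x-x_0}{x_0} + \frac{(x-x_0)^2}{2x_0^2}.
\]
Substitute $x=\m[i+1]$ and take the conditional expectation given $\p$. The linear term vanishes by unbiasedness. The quadratic term gives
\[
\frac{\Var(\m[i+1]\mid\p)}{2\m^2} = \frac{1}{2N}\frac{\Var_{\p}(Q)}{\left(\E_{\p}[Q]\right)^2} = \frac{1}{2N}CV_{\p}(Q)^2 .
\]
This uses the fact that $\E_{\p}[Q]=\m$ is exactly the mean of $Q$ under $\p$, so the ratio is the squared coefficient of variation. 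Taking the outer expectation over $\p$ then yields the claimed $\Delta S(i)\approx \frac{1}{2N}\E_{\p}[CV_{\p}(Q)^2]$.

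The main obstacle is justifying that the remainder is negligible. The theorem's hypothesis, that the second-order Taylor series is a good approximation around $\m$, is what lets us drop the higher-order terms. Still, the argument should note which terms are being dropped. The $k$-th term is $(-1)^k\,\E[(\m[i+1]-\m)^k\mid\p]/(k\,\m^k)$. For a sample mean, the $k$-th central moment is $O(N^{-\lceil k/2\rceil})$. The cubic term is therefore $O(N^{-2})$ relative to the $O(N^{-1})$ quadratic term, so the approximation is leading order in $1/N$.

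I would also flag that the expansion breaks down when $\m[i+1]$ can be near $0$, for instance when some $Q(w\mid T,C)=0$. This is the regime of \Cref{thm:cost_repeated_resampling}'s infinite cost, and it is excluded by the hypothesis. Finally, if the paper's resampling scheme is not multinomial, the variance formula above would need modification; I would state multinomial resampling as the assumed scheme.
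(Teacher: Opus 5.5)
Your proposal is correct and follows the paper's proof essentially step for step. You condition on $\p$ via the tower property and expand $-\log$ to second order around $\m$; the paper packages this expansion as a separate Jensen-gap lemma. The linear term then vanishes by unbiasedness, and $\Var(\m[i+1]\mid\p)=\frac{1}{N}\Var_{\p}(Q)$ gives $\frac{1}{2N}CV_{\p}(Q)^2$ before you average over $\p$. Your added remarks go a bit beyond the paper, which simply invokes the theorem's hypothesis to drop higher-order terms: you note that the dropped terms are $O(N^{-2})$ and that the expansion breaks down when some $Q(w\mid T,C)=0$.
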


The coefficient of variation measures the standard deviation relative to the mean. Variability of $Q$ is high when structures disagree about the probability of $w$, and the mean is low when $w$ is unlikely under dominant structures---so \mbox{$CV_{\p}\big(Q\big)$} is maximized precisely for words disambiguating to dispreferred structures. The effect attenuates with particle count $N$, recovering the intuition that \textbf{the fully parallel surprisal theory does not predict language-independent digging-in effects.}

Our second approximation avoids the poor performance of the Taylor series for small values (unlikely words), instead dividing the total expected surprisal increase (\Cref{thm:cost_repeated_resampling}) by expected convergence time.

\begin{figure*}[htbp]
    \centering
    \includegraphics[width=\linewidth]{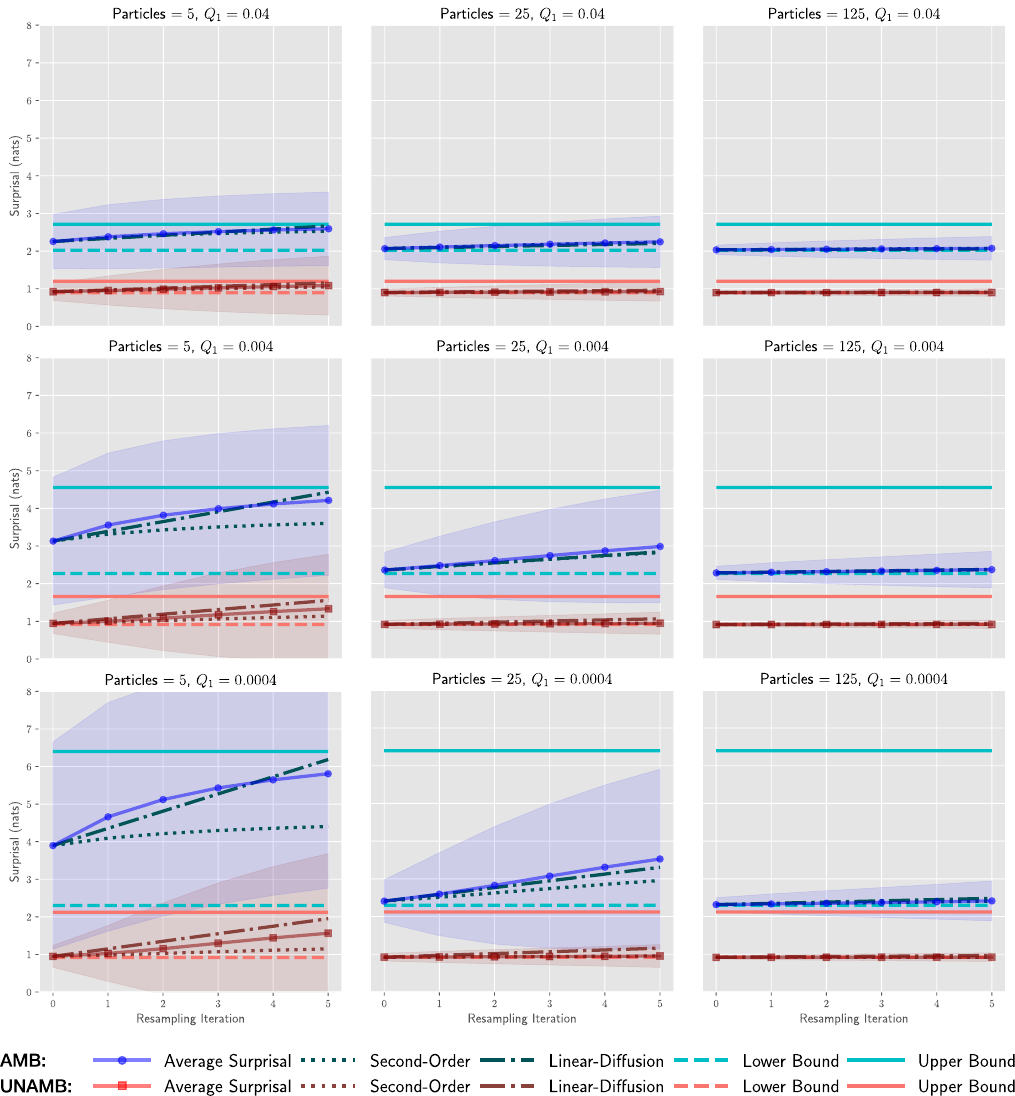}
    \caption{Expected surprisal in early resampling steps, varying disambiguation strength (through $Q_1 = Q(w \mid T_1, C)$) and the number of particles $N$. Fixed parameters: $Q(w\mid T_2,C) = 0.5$, \mbox{$\pi_{\text{AMB}}(T_1\mid C) = 0.8$}, \mbox{$\pi_{\text{UNAMB}}(T_1\mid C) = 0.2$} (again, \textit{this is not unambiguous in the traditional sense}: if there is truly only one compatible structure, then the expected surprisal is fixed). The second-order approximation is computed at each empirical sample of $\p$ and cumulatively summed from the empirical starting point \mbox{$\E_{\p[0]}\left[S\left(\p[0]\right)\right]$}. The linear-diffusion approximation applies a constant slope estimated from the empirical sample of $\p[0]$. Shaded regions indicate \textpm1 stdev of the sample (1 million trials); 95\% error bars would not exceed the line width.}
    \label{fig:grid}
\end{figure*}

\begin{restatable}[Linear-Diffusion Approximation of Surprisal Delta]{thm}{linearDiffusionDelta}\label{thm:linear_diffusion_delta}
If continuous diffusion dynamics provide a good approximation of the discrete multinomial resampling process for $N$ particles and the average surprisal delta is representative for the step of interest, then \mbox{$\Delta S(i)$} is approximately
\begin{equation*}
    \frac{1}{2N}\E_{\p[0]}\left[\frac{D_{KL}\left(\p[0](T\mid C) \,\|\, \p[0](T\mid w,C) \right)}{\sum_{T \in \mathcal{T}}\left(\p[0](T\mid C) - 1\right)\ln \left(1-\p[0](T\mid C)\right)}\right].
\end{equation*}
If \mbox{$\p[0](T\mid C) = 1$} for any structure $T$, we define the entire expression as zero, because only one structure is maintained and therefore the expected surprisal is fixed.
\end{restatable}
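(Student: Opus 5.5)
The plan is to write the per-step delta as a total cost divided by an expected number of steps. The total cost comes from \Cref{thm:cost_repeated_resampling}, and the expected number of steps from the classical Wright--Fisher diffusion approximation to multinomial resampling.

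\textbf{Step 1 (total cost).} Apply \Cref{thm:cost_repeated_resampling} with $i=0$ and average over $\p[0]$:
\[
\E_{\p[\infty]}\left[S\left(\p[\infty]\right)\right]-\E_{\p[0]}\left[S\left(\p[0]\right)\right]
=\E_{\p[0]}\left[D_{KL}\left(\p[0](T\mid C)\,\|\,\p[0](T\mid w,C)\right)\right].
\]
By the telescoping identity $\sum_{i\ge 0}\Delta S(i)$, this quantity equals the total surprisal increase accumulated along the whole resampling trajectory started from a given $\p[0]$.

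\textbf{Step 2 (expected absorption time).} Repeated multinomial resampling with $N$ particles is a neutral Wright--Fisher chain on the simplex. Under the diffusion hypothesis of the statement, the frequency of each structure follows $dx=\sqrt{x(1-x)/N}\,dB$ in units of resampling steps. The process stops changing surprisal once a single structure is fixed.

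For two types, the Kimura/Ewens result follows from solving $\tfrac{x(1-x)}{2N}\,t''(x)=-1$ with $t(0)=t(1)=0$. This gives the expected fixation time
\[
t(x)=-2N\bigl[x\ln x+(1-x)\ln(1-x)\bigr].
\]
For $K$ types, the known multi-allele expression for time to fixation of a single type is $-2N\sum_T(1-x_T)\ln(1-x_T)$. This coincides with the two-type formula when $K=2$, because there $1-x_1=x_2$.

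I would verify the $K$-type formula by checking that it satisfies the backward equation
\[
\tfrac{1}{2N}\sum_{T,T'}x_T(\delta_{TT'}-x_{T'})\,\partial_T\partial_{T'}t=-1.
\]
For $g(x)=\sum_T(1-x_T)\ln(1-x_T)$, the Hessian is diagonal with entries $1/(1-x_T)$. Only the diagonal covariance terms $x_T(1-x_T)$ survive, so the left-hand side reduces to $-\tfrac{1}{2N}\cdot 2N\sum_T x_T=-1$. The boundary condition holds as well: $t=0$ when some $x_T=1$, since every summand then vanishes.

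This gives
\[
\tau(\p[0])=2N\sum_T\left(\p[0](T\mid C)-1\right)\ln\left(1-\p[0](T\mid C)\right).
\]

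\textbf{Step 3 (combine).} Invoke the statement's hypothesis that the average delta is representative of each step. Then for a given $\p[0]$ the per-step delta is approximately the total cost divided by $\tau(\p[0])$. Averaging over $\p[0]$ gives the displayed expression, with the factor $\tfrac{1}{2N}$ pulled out.

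The degenerate case $\p[0](T\mid C)=1$ needs separate handling. There $\tau=0$ and the KL term is also $0$, because the posterior equals the point mass. The expression is therefore defined as zero, matching the fact from \Cref{thm:surprisal_under_resampling} that the variance vanishes and surprisal is fixed.

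The main obstacle is Step 2: justifying the multi-type expected absorption time and being explicit that the "representative step" assumption is a heuristic. In particular, replacing the expected ratio of the total cost to the absorption time by the ratio of expectations along a single path is not exact. I would state that this is precisely the approximation licensed by the hypotheses, rather than try to bound the error.
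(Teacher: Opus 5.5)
Your proposal is correct and matches the paper's proof step for step: you divide the total cost from \Cref{thm:cost_repeated_resampling} by the Wright--Fisher expected fixation time $\tau(\p[0])$, average over $\p[0]$, and extract the factor $\tfrac{1}{2N}$. The only difference is that you check the multi-type fixation time $-2N\sum_T(1-x_T)\ln(1-x_T)$ directly against the backward equation, where the paper imports it by analogy from the population-genetics literature; your diagonal-Hessian computation giving $-\sum_T x_T=-1$ is correct.
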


In the \textit{binary case} of two structures, the denominator equals the Shannon entropy of $\p[0]$, so \textbf{expected surprisal increases faster when there is high certainty for the wrong structure.} For the general case, the denominator is independent of the word $w$, so words with more structural information and thus higher cost must show faster surprisal increases. Since words that disambiguate to dispreferred structures have more information, this produces digging-in effects that attenuate with particle count $N$---matching the second-order approximation.

Using a simplified example, \Cref{fig:grid} visualizes the trajectory of expected surprisal alongside the two approximations for early resampling steps (\mbox{$\Delta S(i)$} for \mbox{$i \in [0, 4]$})---the case most relevant for digging-in effects---varying the disambiguation strength and the number of particles. Using the same data, we then investigate how well our approximations fit the true surprisal delta. \Cref{fig:fits} shows how well our approximations predict the true surprisal delta, visualized on both linear and log-log scales. In terms of magnitude, the linear-diffusion approximation better accounts for large increases, while the second-order approximation better accounts for small increases. In terms of correlation, the second-order approximation is significantly better in both variance explained (Pearson $r^2$: second-order = 0.82; linear-diffusion = 0.70) and rank order (Spearman $\rho$: second-order = 0.99; linear-diffusion = 0.89).\\\par

\begin{figure}[t]
    \centering
    \includegraphics[width=\linewidth]{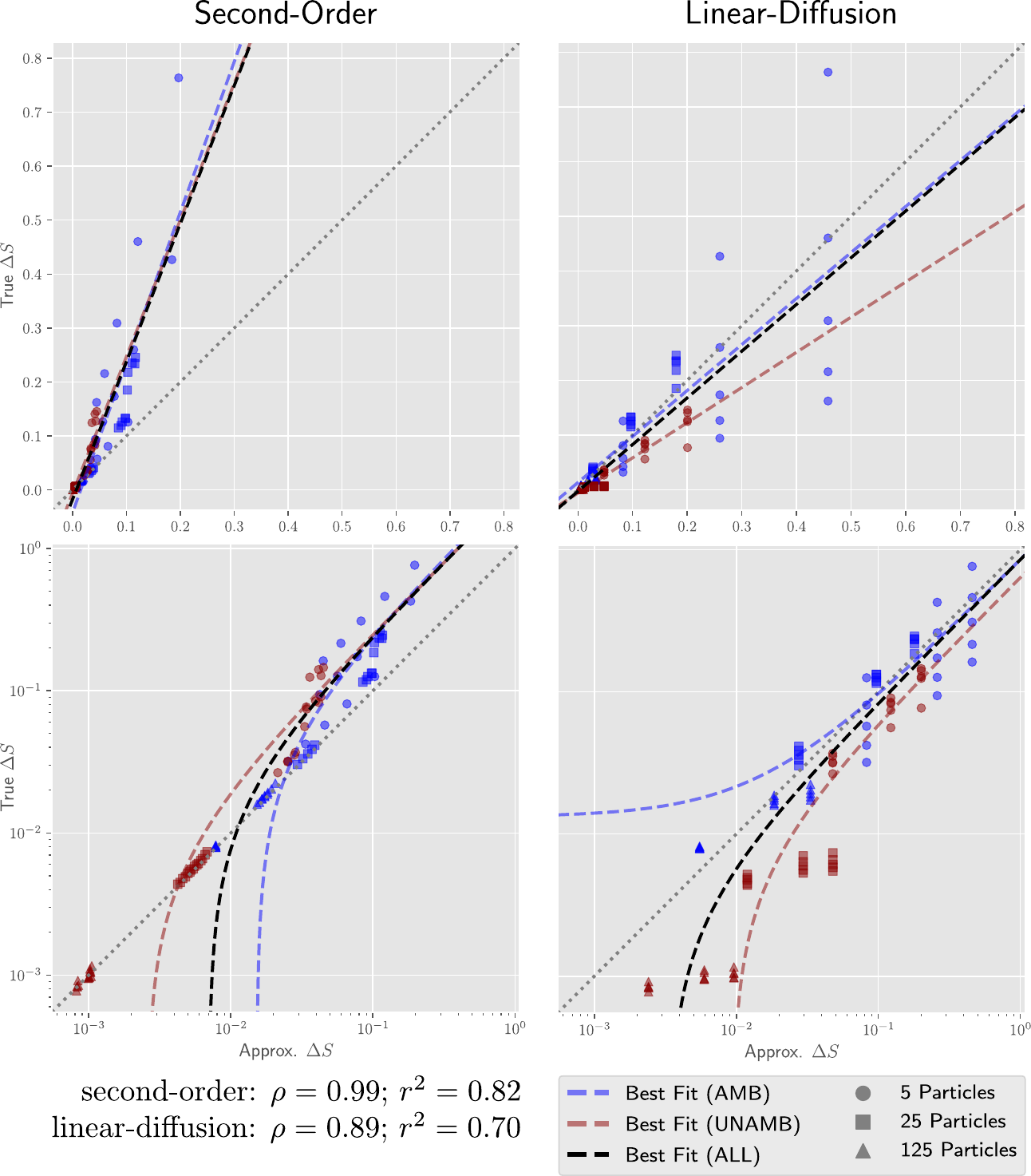}
    \caption{True per-step surprisal increase by approximated value, using the data from \Cref{fig:grid}. Top: visualized on linear scale. Bottom: visualized on log-log scale.}
    \label{fig:fits}
\end{figure}

Mathematically inclined readers may also be interested in \citet{Huggins2014AnIA}, which presents a measure-theoretic analysis of resampling in sequential Monte Carlo methods and describes, more generally, cases where the algorithm without resampling (or with adaptive resampling) might be preferred.

\section{Conclusion}

In this work, we analyzed the algorithmic consequences of particle filter models, focusing on the relationship between resampling and the expected surprisal of a word. We proved that, in the absence of new information, expected surprisal increases monotonically with each resampling step and asymptotically increases by the KL divergence from the prior to the posterior over structures---a cost greatest when words disambiguate to dispreferred structures. This analysis predicts amplified garden-path effects and offers a candidate principled account---within surprisal theory---of why garden-path disambiguation costs are greater than predicted by LLM surprisal estimates \citep{vanSchijndel2020SingleStagePM,wilcox-etal-2021-targeted,Huang2024LargescaleBY}, due to computational constraints on the parallelism of analysis.

We then demonstrated how resampling produces digging-in effects through two approximations of the increase in expected surprisal. Simulations across particle counts and disambiguation strengths validate both approximations and thus their intuitions. In both approximations, digging-in magnitude is inversely proportional to particle count---as expected, fully parallel models predict no digging-in. This analysis provides mathematical grounding for the empirical demonstration of \citet{Levy2008ModelingTE} of digging-in effects in particle filter parsing with resampling, and extends to cases where garden-path disambiguation does not involve parse failure.

These results have important implications for limitedly parallel models of sentence processing. If empirical evidence fails to support robust real-time digging-in effects, this would argue against models with regular resampling---the set of entertained analyses would need to be selectively updated, keeping low-probability parses around for longer than may seem rational. Conversely, robust digging-in effects could help constrain estimates of the degree of parallelism in human parsing.

In principle, the prediction of digging-in effects under surprisal theory should generalize to any model in which analyses may be discarded incrementally, though the specific mathematical derivations presented here may no longer apply. In particle filters, digging-in arises even under stochastic drift due to the nonlinear relationship between probability and surprisal. Models that impose explicit pressure to retain only high-probability analyses, such as beam search, should therefore exhibit equal or stronger digging-in effects, as low-probability parses are intentionally eliminated.

Finally, since our analysis assumes uninformative ambiguous regions, the results should always be interpreted relative to a fully parallel baseline. Our analysis demonstrates that longer ambiguous regions should incur \textit{additional} difficulty beyond what would be predicted by language statistics. Translating these algorithmic results into empirical predictions depends on additional factors not estimated here.

\vspace{3.6mm}

\section{Acknowledgments}

We thank the anonymous HSP 2026 reviewers for their helpful feedback. This work was supported by a grant from the Simons Foundation to the Simons Center for the Social Brain at MIT. Additionally, AMK is supported by the Fannie and John Hertz Foundation and an MIT Dean of Science Fellowship.

\vspace{3.6mm}

\printbibliography
\appendix
\newpage

\section{Proofs}

\surprisalUnderResampling*

\begin{proof}
    We begin with the definition of the surprisal delta:
    \begin{equation}
        \Delta S(i) = \E_{\p[i+1]}\left[S\left(\p[i+1]\right)\right] - \E_{\p}\left[S\left(\p\right)\right].
    \end{equation}
    Using the Law of Total Expectation \citep{casella2002statistical}, we can rewrite the first term with an expectation over the previous sample $\p$:
    \begin{align}
        &= \E_{\p}\left[\E_{\p[i+1]}\left[S\left(\p[i+1]\right)\mid \p \right]\right] - \E_{\p}\left[S\left(\p\right)\right] \\
        &= \E_{\p}\left[\E_{\p[i+1]}\left[S\left(\p[i+1]\right)\mid \p \right] - S\left(\p\right)\right].
    \intertext{We then rewrite $S$ using its definition:}
        &= \E_{\p}\left[\E_{\p[i+1]}\left[-\log \m[i+1]\mid \p \right] + \log \m\right]. \label{eq:delta_s_expanded}
    \end{align}
    Since \mbox{$f(x) = -\log x$} is strictly convex with \mbox{$f''(x) = \sfrac{1}{x^2} > 0$}, we can apply Jensen's Inequality \citep{casella2002statistical} to the random variable \mbox{$\m[i+1]\mid \p$}. Here, Jensen's Inequality tells us that \begin{equation}\label{eq:jensen_application}\E_{\p[i+1]}\left[-\log \m[i+1]\mid \p \right] \geq -\log\E_{\p[i+1]}\left[\m[i+1]\mid \p \right],\end{equation} and so we have that
    \begin{equation}
        \Delta S(i) \geq \E_{\p}\bigg[-\log\E_{\p[i+1]}\left[\m[i+1]\mid \p \right] + \log \m\bigg].\label{eq:after_jensen}
    \end{equation}
    By the linearity of $M$ in $\pi$ and the unbiasedness of the sample mean \citep{casella2002statistical}, we have that \mbox{$\E_{\p[i+1]}\left[\m[i+1]\mid \p \right] = \m$}. Therefore
    \begin{equation}
        \Delta S(i) \geq \E_{\p}\bigg[-\log\m + \log \m\bigg] = 0.
    \end{equation}
    Since \mbox{$f(x) = -\log x$} is strictly convex, the application of Jensen's Inequality in \eqref{eq:jensen_application} is strict whenever the random variable is not almost surely constant \citep{taboga2021jensen}, that is, \mbox{$\Var\left(\m[i+1]\mid \p\right) > 0$}. If there is variance for at least one particle set $\p$ that has nonzero probability, then the expression from \eqref{eq:after_jensen} onward has a strict inequality. Equivalently, the inequality is strict when the expected variance is nonzero: \mbox{$\Delta S(i) > 0$} whenever \begin{equation}
        \E_{\p}\left[\Var\left(\m[i+1]\mid \p\right)\right] > 0.
    \end{equation}
    If there is no variability in the word's probability for the subsequent time step, then the expected surprisal is fixed.
\end{proof}

\begin{lem}[Maximum Expected Surprisal Under Resampling]\label{eq:max_surprisal}
    Define \mbox{$\p[\infty] \coloneqq \lim_{i \to \infty} \p$}. If we condition on a specific distribution within the sequence, $\p[k]$, then 
    \begin{align*}
    \E_{\p[\infty]}\left[S\left(\p[\infty]\right)\mid \p[k]\right]& = \\* \sum_{T \in \mathcal{T}} -&\log Q(w\mid T,C) \p[k](T\mid C).
    \end{align*}
\end{lem}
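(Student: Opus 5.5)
The plan is to treat repeated multinomial resampling of $N$ particles over the finite structure set $\mathcal{T}$ as a Wright--Fisher-type Markov chain on particle counts. The lemma then follows from two facts about this chain: the structure proportions form a bounded martingale, and the chain is eventually absorbed at a single structure.

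\textbf{Martingale property.} Write $\p[i](T\mid C)$ for the fraction of particles carrying structure $T$ after $i$ resampling steps. Since $Q$ and $\pi$ stay fixed in the uninformative region, each resampling step draws $N$ particles i.i.d.\ from $\p[i]$. This is the same unbiasedness already used in the proof of \Cref{thm:surprisal_under_resampling}. It gives
\[
\E\left[\p[i+1](T\mid C)\mid \p[i]\right] = \p[i](T\mid C)
\]
for every $T$. By the tower property, $\E\left[\p[i](T\mid C)\mid \p[k]\right] = \p[k](T\mid C)$ for all $i \geq k$.

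\textbf{Absorption.} Any state with at least two structures represented has probability at least $\min_T \p[i](T\mid C)^N \geq N^{-N} > 0$ of becoming monomorphic in one step. Monomorphic states are absorbing. So from any state, absorption happens with probability at least $N^{-N}$ per step, and absorption occurs almost surely, with geometrically decaying tail. Since the state space is finite, this means the limit $\p[\infty]$ exists almost surely and equals $\delta_{T^*}$ for some random structure $T^*$. The chain is bounded, so dominated convergence lets me pass the martingale identity to the limit:
\[
P\left(T^* = T\mid \p[k]\right) = \E\left[\p[\infty](T\mid C)\mid \p[k]\right] = \p[k](T\mid C).
\]

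\textbf{Conclusion.} On the event $\{T^*=T\}$ we have $\m[\infty](w\mid C) = Q(w\mid T,C)$, so $S(\p[\infty]) = -\log Q(w\mid T,C)$. Taking the conditional expectation over the finitely many outcomes of $T^*$ gives
\[
\E_{\p[\infty]}\left[S\left(\p[\infty]\right)\mid \p[k]\right] = \sum_{T \in \mathcal{T}} -\log Q(w\mid T,C)\,\p[k](T\mid C).
\]
If $Q(w\mid T,C)=0$ for some $T$ with $\p[k](T\mid C)>0$, both sides equal $+\infty$, consistent with the remark after \Cref{thm:cost_repeated_resampling}.

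\textbf{Main obstacle.} The delicate step is justifying the interchange of limit and expectation, i.e.\ making precise that $\p[\infty]$ is a well-defined random variable. I would handle it by noting the chain is absorbed in almost surely finite time $\tau$. Then $S(\p[i]) = S(\p[\infty])$ for all $i \geq \tau$, so only the finitely many absorbing outcomes matter. The fixation probabilities come directly from optional stopping applied to the bounded martingale $\p[i](T\mid C)$. This avoids any integrability concerns with $-\log$, because $S$ is only ever evaluated at the finitely many values $-\log Q(w\mid T,C)$.
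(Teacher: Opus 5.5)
Your proposal is correct and shares the paper's skeleton. The structure proportions $\pi^{(N)}_{s_i}(T\mid C)$ form a bounded martingale under neutral resampling. The limit is a point mass whose location $T^*$ satisfies $P(T^*=T\mid \pi^{(N)}_{s_k}) = \pi^{(N)}_{s_k}(T\mid C)$, obtained by passing the martingale identity to the limit. On each such event the surprisal collapses to $-\log Q(w\mid T,C)$.

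The genuine difference is how you show that the limit exists and is degenerate. The paper applies the Martingale Convergence Theorem coordinatewise. It then asserts that the limit lies in $\{0,1\}$ because ``convergence requires that the variance of the resampling step vanishes,'' which it states only informally. You instead work with the Wright--Fisher chain on particle counts. From any polymorphic state, fixation occurs in one step with probability at least $N^{-N}$, so absorption happens almost surely with a geometric tail. Optional stopping (or bounded convergence) then gives the fixation probabilities.

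Your route buys three things:
\begin{itemize}
\item it is elementary and fully rigorous;
\item it gives a quantitative bound on the absorption time;
\item it shows $S(\pi^{(N)}_{s_i})$ is eventually constant, so no limit--expectation interchange for $-\log$ ever arises.
\end{itemize}
The paper's route is more abstract. It uses only that the one-step conditional variance $\pi(T\mid C)\bigl(1-\pi(T\mid C)\bigr)/N$ vanishes only at $0$ and $1$. That step can be made rigorous through the summability of squared martingale increments. Because of this, it does not depend on $N$-particle empirical distributions living on the lattice $\{0,1/N,\dots,1\}$.

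Two small points of care. Your $\min_T$ should range only over represented structures, or you can use $\max_T$ instead. And if you condition on $\pi$ itself (the $k=-1$ case the paper mentions), your one-step bound applies only from the first resampled step onward.
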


\begin{proof}
Let \mbox{$\pj \coloneqq \p[k+j]$} for \mbox{$j \geq 0$} be a sequence of distributions over structures formed by iteratively resampling $j$ times from an initial distribution \mbox{$\p[k](T\mid C)$}. For any structure $T$, this sequence satisfies the three conditions of a Martingale \citep{durrett2019probability}: (i) \mbox{$\E\left[\lvert \pj(T\mid C) \rvert \right] < \infty$}, by nature of being a probability; (ii) the sequence is adapted to the natural filtration generated by the resampling steps; and (iii) \mbox{$\E\left[\pj[j+1](T\mid C)\right] = \pj(T\mid C)$}, by the unbiasedness of the sample mean. By the Martingale Convergence Theorem \citep{durrett2019probability,doob1953stochastic}, this sequence converges to a random variable \mbox{$\pj[j\to\infty](T\mid C)$} with finite expectation. Convergence requires that the variance of the resampling step vanishes, which occurs only when \mbox{$\pj(T\mid C) = 0$} or \mbox{$\pj(T\mid C) = 1$}. Since the sequence must converge, \mbox{$\pj[j\to\infty](T\mid C) \in \{0, 1\}$} almost surely. However, as a Martingale, expectation is preserved throughout, and---since the sequence is bounded in \mbox{$[0,1]$} and thus uniformly integrable---this extends to the limit:
\begin{equation}
    \E_{\pj[j\to\infty]}\left[\pj[j\to\infty](T\mid C)\right] = \pj[0](T\mid C) = \p[k](T\mid C).
\end{equation}
Since \mbox{$\pj[j\to\infty](T\mid C) \in \{0, 1\}$}, it must be the case that the event \mbox{$\{\pj[j\to\infty](T\mid C) = 1\}$} occurs with probability \mbox{$\p[k](T\mid C)$}. Let $\pj[j\to\infty,T]$ refer to the $\pj[j\to\infty]$ that assigns \mbox{$\pj[j\to\infty](T\mid C) = 1$} and occurs with probability
\begin{equation}\label{eq:probj}
    P\left(\pj[j\to\infty] = \pj[j\to\infty,T]\right) = \p[k](T\mid C).
\end{equation}
With this, we can derive an expression for the maximum expected surprisal, conditioned on a previous distribution $\p[k]$:
\begin{equation}
    \E_{\p[\infty]}\left[S\left(\p[\infty]\right)\mid \p[k]\right] \coloneqq \E_{\pj[j\to\infty]}\left[S\left(\pj[j\to\infty]\right)\right].
\end{equation}
Rewriting the right-hand expectation, we have
\begin{equation}
    \E_{\p[\infty]}\left[S\left(\p[\infty]\right)\mid \p[k]\right] = \sum_{\pj[j\to\infty]}S\left(\pj[j\to\infty]\right) P\left(\pj[j\to\infty]\right).
\end{equation}
We can then re-index the expectation using the structures $T$ that each $\pj[j\to\infty]$ place their probability mass on:
\begin{align}
    &= \sum_{T \in \mathcal{T}}S\left(\pj[j\to\infty,T]\right)P\left(\pj[j\to\infty] = \pj[j\to\infty,T]\right),
\intertext{which, by \eqref{eq:probj}, is}
    &= \sum_{T \in \mathcal{T}} S\left(\pj[j\to\infty,T]\right) \p[k](T\mid C).
\end{align}
Focusing on the surprisal term, by definition we have that $S\left(\pj[j\to\infty,T]\right)$ is equal to
\begin{align}
    -&\log \sum_{T^* \in \mathcal{T}}Q(w\mid T^*,C)\pj[j\to\infty,T](T^*\mid C).
\intertext{Since $\pj[j\to\infty,T]$ puts all mass on $T$, it functions as an indicator variable for that structure,}
    = -&\log \sum_{T^* \in \mathcal{T}} Q(w\mid T^*,C)\mathds{1}_{\{T^* = T\}},
\intertext{and selects out only that term from the sum,}
    = -&\log Q(w\mid T,C).
\end{align}
Substituting this back in, we obtain the desired result
\begin{align}
    \E_{\p[\infty]}\left[S\left(\p[\infty]\right)\mid \p[k]\right]& = \nonumber \\* 
    \sum_{T \in \mathcal{T}} -&\log Q(w\mid T,C) \p[k](T\mid C).
\end{align}
This expression is closely related to the initial surprisal. While $S\left(\p[k]\right)$ applies $-\log$ to the expectation of $Q$ over structures, \mbox{$\E_{\p[\infty]}\left[S\left(\p[\infty]\right)\mid \p[k]\right]$} is the expectation of $-\log Q$.
\end{proof}

\costRepeatedResampling*

\begin{proof}
Applying \Cref{eq:max_surprisal}, we have that
\begin{align}
    \E_{\p[\infty]}\left[S\left(\p[\infty]\right)\mid \p\right]& - S\left(\p\right) = \nonumber \\*
    \sum_{T \in \mathcal{T}} -\log Q(w\mid T,&C) \p(T\mid C) - S\left(\p\right).
\end{align}
We next rewrite the surprisal term,
\begin{align}
    &=\sum_{T \in \mathcal{T}} -\log Q(w\mid T,C) \p(T\mid C) + \log \m, \\
\intertext{and use the fact that \mbox{$\sum_{T \in \mathcal{T}}\p(T\mid C) = 1$} to combine the terms:}
    &=\sum_{T \in \mathcal{T}} -\log Q(w\mid T,C) \p(T\mid C) \nonumber \\* 
    &\hspace{1in}+\sum_{T \in \mathcal{T}} \p(T\mid C) \log \m \\
    &= \sum_{T \in \mathcal{T}}  \p(T\mid C) \left(\log \m - \log  Q(w\mid T,C)\right) \\
    &= \sum_{T \in \mathcal{T}} \p(T\mid C) \log \left(\frac{\m}{Q(w\mid T,C)} \right).
\end{align}
We now focus on just the inside of the log term. Expanding $\m$ using its definition, we obtain
\begin{equation}
    \frac{\sum_{T^* \in \mathcal{T}} Q(w\mid T^*,C) \p(T^*\mid C)}{Q(w\mid T,C)}.
\end{equation}
Then, we can insert a multiplicative factor of \mbox{$\frac{\p(T\mid C)}{\p(T\mid C)}$}. Note that cases where \mbox{$\p(T\mid C) = 0$} can be safely ignored due to the factor of \mbox{$\p(T\mid C)$} outside of the log which would render the entire term zero.
\begin{equation}
    = \frac{\p(T\mid C)\sum_{T^* \in \mathcal{T}} Q(w\mid T^*,C) \p(T^*\mid C)}{\p(T\mid C)Q(w\mid T,C)}.
\end{equation}
Following Bayes' rule, we define
\begin{equation}
\p(T\mid w,C) \coloneqq \frac{Q(w\mid T,C)\p(T\mid C)}{\sum_{T^* \in \mathcal{T}}Q(w\mid T^*,C)\p(T^*\mid C)},
\end{equation}
and substitute this into the previous expression, obtaining
\begin{equation}
    = \frac{\p(T\mid C)}{\p(T\mid w,C)}.
\end{equation}
Putting everything back together, we have that
\begin{align}
    \E_{\p[\infty]}\Big[S\Big(\p[\infty]&\Big)\mid \p\Big] - S\left(\p\right) = \nonumber \\*
    &\sum_{T \in \mathcal{T}} \p(T\mid C) \log \left(\frac{\p(T\mid C)}{\p(T\mid w,C)}\right).
\end{align}
Lastly, we recognize this expression as the Kullback--Leibler divergence and produce the result:
\begin{equation}
    = D_{KL}\left(\p(T\mid C) \,\|\, \p(T\mid w,C) \right).
\end{equation}
Thus, resampling introduces a cost that is associated with ignoring the structural information present in the word. Since each $\p[\infty]$ has committed to a single structure, observing the word produces no structural belief updates.
\end{proof}

\begin{lem}[Second-Order Approximation of Jensen Gap]\label{lem:second_order_jensen}
    Let $X$ be a random variable with mean $\mu$ and variance $\sigma^2$, and let $f$ be a twice-differentiable function. If the higher-order terms of the Taylor expansion are negligible, then the Jensen gap can be approximated as
\begin{equation*}
    \E_X \left[f\Big(X\Big)\right] - f\left(\E_X \Big[X\Big]\right) \approx \frac{1}{2}f''(\mu)\sigma^2.
\end{equation*}
\end{lem}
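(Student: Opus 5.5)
The plan is to expand $f$ in a second-order Taylor series around the mean $\mu = \E_X[X]$ and then take expectations term by term. Since $f$ is twice differentiable, we write
\begin{equation*}
    f(X) = f(\mu) + f'(\mu)(X-\mu) + \tfrac{1}{2}f''(\mu)(X-\mu)^2 + R(X),
\end{equation*}
where $R(X)$ collects the higher-order terms. By the hypothesis of the lemma, $R(X)$ is negligible, and we will treat its expectation $\E_X[R(X)]$ as negligible as well. This is the only approximation step, and the approximation symbol in the statement absorbs it.

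Next we take the expectation of both sides over $X$ and use linearity:
\begin{enumerate}
\item The constant term contributes $f(\mu)$, which equals $f(\E_X[X])$ by the definition of $\mu$.
\item The linear term vanishes, since $\E_X[X-\mu] = \mu - \mu = 0$.
\item The quadratic term becomes $\tfrac{1}{2}f''(\mu)\,\E_X[(X-\mu)^2] = \tfrac{1}{2}f''(\mu)\sigma^2$, by the definition of variance.
\end{enumerate}
Subtracting $f(\E_X[X])$ from both sides then gives the stated approximation of the Jensen gap.

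The only real obstacle is being precise about what ``negligible'' means. One would like $\E_X[R(X)] = o(\sigma^2)$, for example because the third and higher central moments are small relative to $\sigma^2$. This needs care for $f(x) = -\log x$ near $0$, where the derivatives blow up. Because the lemma is stated conditionally on the higher-order terms being negligible, I would not try to bound the remainder. Instead I would state explicitly that the approximation error equals $\E_X[R(X)]$. If a bound were wanted, Taylor's theorem with the Lagrange remainder would give $\lvert R(X)\rvert \le \tfrac{1}{6}\sup\lvert f'''\rvert\,\lvert X-\mu\rvert^3$ on the support of $X$. For $X = \m[i+1]\mid\p$, a multinomial sample mean with $N$ particles, this bound is $O(N^{-3/2})$, compared with a main term of order $N^{-1}$. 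This is consistent with how the lemma is used afterwards for \Cref{thm:second_order_delta}.
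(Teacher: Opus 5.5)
Your proposal is correct and follows the paper's proof essentially step for step. Both expand $f$ to second order about $\mu$, take expectations term by term so that the linear term vanishes and the quadratic term gives $\tfrac{1}{2}f''(\mu)\sigma^2$, and then subtract $f(\mu)$. Your additions go beyond what the lemma requires but are sound: naming the error explicitly as $\E_X[R(X)]$, and noting the $O(N^{-3/2})$ Lagrange-remainder bound for the multinomial sample mean (valid when every $Q(w\mid T,C)>0$). They make precise what the paper leaves as the informal hypothesis that higher-order terms are negligible.
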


\begin{proof}
We begin with a second-order Taylor series expansion of $f(X)$ around $\mu$:
\begin{equation}
    f(\mu) + f'(\mu)(X - \mu) + \frac{1}{2}f''(\mu)(X - \mu)^2.
\end{equation}
Now we apply the expectation, approximating $\E_X\Big[f(X)\Big]$ with
\begin{equation}
    \E_X\left[f(\mu) + f'(\mu)(X - \mu) + \frac{1}{2}f''(\mu)(X - \mu)^2\right].
\end{equation}
By applying linearity of the expectation and extracting constants, we have
\begin{equation}
    = f(\mu) + f'(\mu)\E_X\Big[(X - \mu)\Big] + \frac{1}{2}f''(\mu)\E_X\Big[(X - \mu)^2\Big]. \\
\end{equation}
Recognizing that \mbox{$\E_X\left[(X - \mu)^2\right] = \sigma^2$} and observing that \mbox{$\E_X\left[(X - \mu)\right] = 0$} produces a final approximation for the term:
\begin{equation}
   \E_X \left[f\Big(X\Big)\right] \approx f(\mu) + \frac{1}{2}f''(\mu)\sigma^2.
\end{equation}
Lastly, subtracting $f\left(\E_X \left[X\right]\right) = f(\mu)$ yields the result:
\begin{equation}
    \E_X \left[f\Big(X\Big)\right] - f\left(\E_X \Big[X\Big]\right) \approx \frac{1}{2}f''(\mu)\sigma^2.
\end{equation}
This second-order approximation reflects the intuition that the Jensen Gap is largest when there is high variance.
\end{proof}

\secondOrderDelta*

\begin{proof}
We begin from \eqref{eq:delta_s_expanded}, an earlier expression for the surprisal delta:
\begin{equation}
    \hspace{-0.8mm}\Delta S(i) = \E_{\p}\left[\E_{\p[i+1]}\left[-\log \m[i+1]\mid \p \right] + \log \m\right].
\end{equation}
Next, we rewrite $\m$ as \mbox{$\E_{\p[i+1]}\left[\m[i+1]\mid\p\right]$}, using the linearity of $M$ in $\pi$ and the unbiasedness of the sample mean. Within the outermost expectation, there is now a clear Jensen Gap for function \mbox{$f(x) = -\log x$} and random variable \mbox{$X = \m[i+1]\mid \p$},
\begin{equation}
    \E_{\p[i+1]}\left[-\log \m[i+1]\mid \p \right] + \log \E_{\p[i+1]}\left[\m[i+1]\mid\p\right],
\end{equation}
which we can approximate via \Cref{lem:second_order_jensen}. With \mbox{$f''(x) = \sfrac{1}{x^2}$}, \mbox{$\mu = \m$}, and \mbox{$\sigma^2 = \Var_{\p[i+1]}\left(\m[i+1]\mid \p\right)$}, we obtain the following approximation for the surprisal delta:
\begin{equation}
    \Delta S(i) \approx  \E_{\p}\left[\frac{1}{2} \cdot \frac{1}{\left(\m\right)^2} \cdot \Var_{\p[i+1]}\left(\m[i+1]\mid \p\right)\right].
\end{equation}
Let us now pause to determine what \mbox{$\Var_{\p[i+1]}\left(\m[i+1]\mid \p\right)$} is. Recall that \mbox{$\m[i+1]$} equals \mbox{$\sum_{T \in \mathcal{T}} Q(w\mid T, C)\p[i+1](T\mid C)$} and that $\p[i+1]$ is produced by sampling $N$ particles from $\p$. Consequently, $\m[i+1]$ is a sample mean of \mbox{$Q_n = Q(w\mid T_n,C)$}, where $T_n$ are drawn i.i.d. from $\p$:
\begin{equation}
    \Var_{\p[i+1]}\left(\m[i+1]\mid \p\right) = \Var_{\p}\left(\frac{1}{N}\sum_{n=1}^{N} Q_n\right).
\end{equation}
Because each $Q_n$ is independent and has identical variance, under properties of variance, the right-hand side simplifies to
\begin{equation}
    \Var_{\p}\left(\frac{1}{N}\sum_{n=1}^{N} Q_n\right) = \frac{1}{N^2}\sum_{n=1}^{N}\Var_{\p}\left(Q_n\right) = \frac{1}{N}\Var_{\p}\left(Q\right).
\end{equation}
Substituting this back in, we obtain
\begin{align}
    \Delta S(i) &\approx  \E_{\p}\left[\frac{1}{2} \cdot \frac{1}{\left(\m\right)^2} \cdot \frac{1}{N}\Var_{\p}\left(Q\right)\right] \\
    &\approx \frac{1}{2N}\E_{\p}\left[\frac{\Var_{\p}\left(Q\right)}{\left(\m\right)^2}\right].
\intertext{Rewriting $\m$ as the mean of $Q$ and recognizing that we have \mbox{$\sfrac{\sigma}{\mu}$}, the coefficient of variation, we produce the result:}
    &\approx \frac{1}{2N}\E_{\p}\left[\frac{\Var_{\p}\left(Q\right)}{\left(\E_{\p}\left[Q\right]\right)^2}\right] \\
    &\approx \frac{1}{2N}\E_{\p}\left[CV_{\p}(Q)^2\right].
\end{align}
Thus the surprisal delta grows with the expected coefficient of variation of $Q$ under $\p$ and attenuates with particles $N$.\looseness=-1
\end{proof}

\begin{restatable}[Expected Time to Convergence]{lem}{expectedTime}\label{lem:expected_time}
If continuous diffusion dynamics provide a good approximation of the discrete multinomial resampling process for $N$ particles, then the expected time $\tau$ for $\p[0]$ to converge to a single structure is
\begin{equation*}
    \tau\left(\p[0]\right) \hspace{-0.7mm} \approx \hspace{-0.5mm} -2N\hspace{-1mm}\sum_{T \in \mathcal{T}}\hspace{-0.5mm}\left(1-\p[0](T\mid C)\right)\ln \left(1-\p[0](T\mid C)\right).
\end{equation*}
\end{restatable}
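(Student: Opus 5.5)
We plan to obtain this from the classical Wright--Fisher diffusion approximation of population genetics, treating each structure as an allele and each multinomial resampling step as one generation. First we will check the scaling. Conditional on $\p$, the vector $N\p[i+1](\cdot\mid C)$ is multinomial with $N$ trials and cell probabilities $\p(\cdot\mid C)$. So for each structure $T$, the increment $x_{i+1}-x_i$ with $x_i = \p(T\mid C)$ has conditional mean $0$ and conditional variance $x_i(1-x_i)/N$. With time measured in resampling steps, the diffusion limit is therefore $dx = \sqrt{x(1-x)/N}\,dW$, with infinitesimal generator
\begin{equation*}
    \mathcal{L}g(x) = \frac{x(1-x)}{2N}\, g''(x).
\end{equation*}
The drift vanishes, consistent with the martingale property used in \Cref{eq:max_surprisal}.

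Next we will solve the expected-absorption-time problem in the binary case first. Let $t(x)$ be the expected time until $x$ hits $\{0,1\}$. It satisfies $\mathcal{L}t = -1$ with $t(0)=t(1)=0$, i.e.\ $t''(x) = -2N/(x(1-x))$. Integrating twice, using $\frac{1}{x(1-x)} = \frac1x + \frac1{1-x}$ and fixing constants by the boundary conditions, gives the standard Kimura--Ohta result
\begin{equation*}
    t(x) = -2N\big[x\ln x + (1-x)\ln(1-x)\big].
\end{equation*}
With two structures, $1-\p[0](T_1\mid C) = \p[0](T_2\mid C)$, so this is exactly the claimed sum over $\mathcal{T}=\{T_1,T_2\}$.

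For the general case of $K=|\mathcal{T}|$ structures, we will use the multi-allele Wright--Fisher diffusion with generator
\begin{equation*}
    \mathcal{L} = \frac{1}{2N}\sum_{T,T'} x_T(\delta_{TT'} - x_{T'})\,\partial_T\partial_{T'}.
\end{equation*}
We interpret ``converge to a single structure'' as fixation of some structure, i.e.\ absorption at a vertex of the simplex. The plan is to verify directly that
\begin{equation*}
    \tau(x) = -2N\sum_T (1-x_T)\ln(1-x_T)
\end{equation*}
solves $\mathcal{L}\tau = -1$ and vanishes at vertices. At a vertex one $x_T=1$ and all the others are $0$, and each term is zero using $0\ln 0 = 0$. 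For the PDE, only diagonal second derivatives are nonzero, because each summand depends on a single coordinate. Writing $h(y) = (1-y)\ln(1-y)$, we have $h''(y) = 1/(1-y)$. Hence
\begin{equation*}
    \mathcal{L}\tau = \frac{1}{2N}\sum_T x_T(1-x_T)\cdot\left(-\frac{2N}{1-x_T}\right) = -\sum_T x_T = -1,
\end{equation*}
as required.

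The main obstacle is justifying that this particular solution is the correct one. On faces of the simplex (loss of an allele before fixation) the boundary is not absorbing for the fixation time, and the PDE solution is not unique without boundary behavior. We would argue this as Littler (1975) does for fixation of any allele: $\tau$ restricted to each face coincides with the same formula in lower dimension, since terms with $x_T=0$ vanish. So it is continuous across faces and consistent with the lower-dimensional process after allele loss. Optional stopping applied to $\tau(X_t)+t$ then gives the result. Beyond this, accuracy of the diffusion approximation for small $N$ and early steps is simply assumed, per the lemma's hypothesis.
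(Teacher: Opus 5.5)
Your proposal is correct, but it takes a different route from the paper. The paper derives nothing. It quotes the known mean time until fixation of the last surviving allele in the neutral Wright--Fisher diffusion, $-2N\sum_i(1-f_{i,0})\ln(1-f_{i,0})$, from a population-genetics review. It then transfers this by analogy: particles map to individuals, structures to alleles, and $\pi^{(N)}_{s_0}(T\mid C)$ to initial frequencies. You instead justify the analogy and re-derive the formula. The conditional multinomial variance $x(1-x)/N$ per resampling step identifies the haploid generator $\frac{x(1-x)}{2N}\,\partial_x^2$, with time measured in resampling steps. That is exactly what fixes the constant $2N$ (rather than a diploid $4N$) and the time unit, both of which the paper's analogy step leaves implicit. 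You then solve the binary backward equation explicitly, and in the multi-structure case you verify $\mathcal{L}\tau=-1$ with $\tau=0$ at the vertices. The paper's route buys brevity; yours buys a self-contained argument and a check of the normalization.

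Your stated main obstacle is also milder than you suggest. Since $h(y)=(1-y)\ln(1-y)$ is smooth on $[0,1)$, $\tau$ is $C^2$ on the whole simplex minus its vertices, including across faces. The identity $\mathcal{L}\tau=-\sum_T x_T=-1$ holds there as well: on a face $x_S=0$ the $S$-row of the diffusion matrix vanishes, so it is automatically the lower-dimensional equation. The only singularity is $h''(y)=1/(1-y)$ as $y\to 1$, which you can handle by localization:
\begin{itemize}
\item apply Dynkin's formula up to the first time $\max_T x_T$ reaches $1-\epsilon$;
\item use $\tau\ge 0$ and boundedness of $\tau$ to let $t\to\infty$;
\item note that $0\le\tau\le 2N(\epsilon\ln(1/\epsilon)+\epsilon)$ on $\{\max_T x_T=1-\epsilon\}$, and let $\epsilon\to 0$.
\end{itemize}
No separate face-by-face uniqueness argument is needed.
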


\begin{proof}
In the neutral Wright-Fischer model of genetic drift as described in \citet{genes14091751}, there is a population of $N$ individuals with $A$ possible allele types. Let \mbox{$k_1, k_2, \ldots, k_A$} be the counts of the different alleles in the population with $\sum_{i=1}^A k_i = N$ and let \mbox{$f_1, f_2, \ldots, f_A$} be the corresponding frequencies $\sfrac{k_i}{N}$. In the discrete formulation of the process, each new generation is formed by sampling with replacement from the old generation. From equation (30) in \citet{genes14091751}, for a constant population size $N$ and $A_0$ initial alleles with initial frequencies \mbox{$f_{1,0}, f_{2,0}, \ldots, f_{A_0,0}$}---under the diffusion approximation of the process---the expected time to fixation of the last surviving allele is 
\begin{equation}
    -2N \sum_{i=1}^{A_0}(1-f_{i,0})\ln (1-f_{i,0}).
\end{equation}
This framework applies to ours by analogy: individuals $N$ correspond to particles $N$, alleles $A$ to structures $\lvert\mathcal{T}\rvert$, and frequencies $f_{i,0}$ to probabilities \mbox{$\p[0](T_i\mid C)$}. Applying their result to our context, we obtain an approximation for the expected time to convergence:
\begin{equation}
    -2N\sum_{T \in \mathcal{T}}\left(1-\p[0](T\mid C)\right)\ln \left(1-\p[0](T\mid C)\right).
\end{equation}
Convergence is the time at which all probability mass has been placed on a single structure. While the expression closely resembles a Shannon entropy over structures, \mbox{$H(T) = -\sum_{T\in\mathcal{T}}P(T)\ln P(T)$}, the term \mbox{$1 - \p[0](T\mid C)$} will only correspond to the probability of a structure in the binary case of two structures.
\end{proof}

\linearDiffusionDelta*

\setlength{\abovedisplayskip}{5pt}
\setlength{\belowdisplayskip}{5pt}

\begin{proof}
We create a linear approximation of the surprisal delta by dividing the total increase in surprisal from the starting point $\p[0]$ by its expected time to converge. We then take the expectation over possible starting points $\p[0]$, producing an approximate expected slope:
\begin{align}
    \Delta S(i) &\approx \E_{\p[0]}\left[\frac{\E_{\p[\infty]}\left[S\left(\p[\infty]\right) \mid \p[0]\right] - S\left(\p[0]\right)}{\tau\left(\p[0]\right)}\right].
\end{align}
By \Cref{thm:cost_repeated_resampling} and \Cref{lem:expected_time}, we have that this is equal to
\begin{equation}
\hspace{-1.7mm}\E_{\p[0]}\hspace{-1mm}\left[\hspace{-0.5mm}\frac{D_{KL}\left(\p[0](T\mid C) \,\|\, \p[0](T\mid w,C) \right)}{-2N\hspace{-0.5mm}\sum_{T \in \mathcal{T}}\hspace{-0.5mm}\left(\hspace{-0.2mm}1\hspace{-0.2mm}-\hspace{-0.2mm}\p[0](T\mid C)\hspace{-0.2mm}\right)\hspace{-0.5mm}\ln\hspace{-0.5mm}\left(\hspace{-0.2mm}1\hspace{-0.2mm}-\hspace{-0.2mm}\p[0](T\mid C)\hspace{-0.2mm}\right)}\hspace{-0.5mm}\right]\hspace{-0.8mm}.\hspace{-0.5mm}
\end{equation}
Extracting constants and simplifying, we obtain the result:
\begin{equation}
\hspace{-1.3mm}\frac{1}{2N}\hspace{-1mm}\E_{\p[0]}\hspace{-1mm}\left[\hspace{-0.5mm}\frac{D_{KL}\left(\p[0](T\mid C) \,\|\, \p[0](T\mid w,C) \right)}{\sum_{T \in \mathcal{T}}\hspace{-0.5mm}\left(\hspace{-0.2mm}\p[0](T\mid C)\hspace{-0.2mm}-\hspace{-0.2mm}1\hspace{-0.2mm}\right)\hspace{-0.5mm}\ln\hspace{-0.5mm}\left(\hspace{-0.2mm}1\hspace{-0.2mm}-\hspace{-0.2mm}\p[0](T\mid C)\hspace{-0.2mm}\right)}\hspace{-0.5mm}\right]\hspace{-0.5mm}.\hspace{-0.2mm}
\end{equation}
Since the structural convergence time is independent of the word $w$, words that have higher structural information cost will show larger per-step increases in expected surprisal.
\end{proof}

\end{document}